\newcommand{\ubar}[1]{\underaccent{\bar}{#1}}
\newcommand{\diag}{{\rm diag\;}}
\newcommand{\col}{{\rm col\;}}
\DeclarePairedDelimiter{\norm}{\lVert}{\rVert}
\DeclareMathOperator{\rank}{rank}
\newtheorem{definition}{Definition}
\newtheorem{lemma}{Lemma}
\title{\LARGE \bf
Learning Objective Functions Incrementally by Inverse Optimal Control}
\begin{document}

\author{Zihao Liang$^{*}$, Wanxin Jin$^{*}$, Shaoshuai Mou$^{*}$

\thanks{$^{*}$The authors are with the School of Aeronautics and Astronautics, Purdue University, 47906 IN, USA.
        Email: {\tt\small liang331@purdue.edu, wanxinjin@gmail.com, mous@purdue.edu}}%
}

\maketitle
\thispagestyle{empty}
\pagestyle{empty}

\begin{abstract}


This paper proposes an inverse optimal control method which enables a robot to incrementally learn a control objective function from a collection of trajectory segments. By saying incrementally, it means that the collection of trajectory segments is enlarged because  additional segments are provided as time evolves. The unknown objective function is parameterized as a weighted sum of features with unknown weights. Each trajectory segment is a small snippet of optimal trajectory.  The proposed method shows that each trajectory segment, if informative, can pose a linear constraint to the unknown weights, thus, the objective function can be learned by incrementally incorporating all informative segments.  Effectiveness of the method is shown on a simulated 2-link robot arm and a 6-DoF maneuvering quadrotor system, in each of which only small demonstration segments are available.


\end{abstract}

\section{Introduction}

In recent years, the advancements in robotics and computation capability has empowered robots to perform certain tasks by mimicking human users' behavior. Besides accomplishing complex tasks, the robots are also required to complete it in a way that a human user prefers. However, in real world applications, different human users have different preferences. As it is cumbersome to program various of preferences into robots, it is necessary to develop a method for a robot to learn human users' desired performances. This opens the door for the research in imitation learning, which allows a robot to acquire skills or behavior by observing demonstration from an expert in a given task.

With the capability of recovering an  objective function of an optimal control system from observations of the system's trajectories, inverse optimal control (IOC) has been widely applied in
learning from demonstrations \cite{abbeel2004apprenticeship,jin2020learningsparse}, where a learner  mimics an expert by learning the expert's  underlying objective function,  autonomous vehicles \cite{kuderer2015learning}, where  human driver's driving preference is learned and  transferred to vehicle controllers,  and 
human-robot interactions \cite{mainprice2016goal,byeon2021human,jin2020learningdirection},  where an objective function of  human motor control is inferred to enable efficient prediction and  coordination.

Existing IOC methods usually assume the unknown objective function could be parameterized as a linear combination of selected features (or basis functions) \cite{ng2000algorithms,mombaur2010human}. Here, each feature characterizes one aspect of the performance of the system operation, such as energy cost, time consumption,  risk levels, etc. Then, the goal of IOC   becomes   estimating the unknown weights for those features \cite{jin2019inverse}.  The authors of \cite{ratliff2006maximum,ziebart2008maximum,ziebart2009planning,jin2019pontryagin,jin2021safe} have adopted a double-layer architecture, where the estimate of the weights is updated in an outer layer while the corresponding optimal trajectory is generated by solving the optimal control problem in an inner layer.  Techniques based on the double-layer framework usually suffer high computational cost since optimal control problems need to be solved repeatedly \cite{jin2018inverse}.  Recent IOC techniques have been developed by leveraging optimality conditions,  which the observed optimal trajectory must satisfy,  and thus the unknown weights can be directly obtained by  solving  the established optimality equations.   Related work along this direction includes \cite{keshavarz2011imputing,puydupin2012convex,englert2017inverse}, where  Karush-Kuhn-Tucker conditions are used, \cite{molloy2018finite}, where  Pontryagin's maximum principle \cite{pontryagin1962mathematical} are used.

Despite significant progress achieved as described above, most existing  IOC methods cannot learn the objective function unless  a complete system  trajectory within an entire time horizon is observed. Such requirement of  observations has limited their capabilities in disjointed or sparse segments of a complete trajectory. Apart from our problem setting, in some applications where only incomplete trajectory data is available, for example, due to limited sensing capability, sensor failures, or occlusion \cite{bogert2016expectation,bajcsy2017learning}, the method of IOC does not guarantee to retrieve an accurate representation of the objective function. In \cite{bajcsy2017learning}, given sparse corrections (demonstrations),  the authors  create an intended trajectory of full horizon based on the sparse data  using trajectory shaping/interpolation \cite{dragan2015movement}, in order to utilize the maximum margin IOC approach \cite{ratliff2006maximum}. Although successful in learning from human corrections, it is likely that the artificially-created trajectory might not exactly reflect the actual trajectory of a human expert. In \cite{bogert2017scaling}, the authors model the missing data using a  probability distribution, then both the objective function and the missing part are learned under the maximization-expectation  framework. Besides  huge computational cost,  this work, however, has not provided how percentage of missing information affects  learning performance. 

In recognition of the above limitations, this paper aims to develop an approach to learn the objective function  incrementally from available trajectory segments. By saying \emph{trajectory segments}, we refer to a collection of segments of the system's trajectory of states and inputs in any time intervals of the horizon; we  allow a  segment to be a single data point, i.e., a state/input at a single  time instant. Each segment may not be sufficient to determine the objective function by itself, an incremental approach will be developed to incorporate all available segments to achieve an estimate of the unknown  weights of the objective function. 


\subsection*{Notations}
The column operator $\col \{\boldsymbol{x}_1,\boldsymbol{x}_2,...,\boldsymbol{x}_k\}$ stacks its (vector) arguments into a column.
$\boldsymbol{x}_{k_1:k_2}$ denotes a stack of multiple $\boldsymbol{x}$ from $k_1$ to $k_2$ ($k_1\leq k_2$), that is,  $\boldsymbol{x}_{k_1:k_2}=\col \{\boldsymbol{x}_{k_1},...\boldsymbol{x}_{k_2}\}$.
$\boldsymbol{A}$ (bold-type) denotes a block matrix.
Given a vector function $\boldsymbol{f}(\boldsymbol{x})$ and a constant $\boldsymbol{x}^*$, $\frac{\partial \boldsymbol{f}}{\partial \boldsymbol{x}^*}$ denotes the Jacobian matrix with respect to $\boldsymbol{x}$ evaluated at $\boldsymbol{x}^*$.
Zero matrix/vector is denoted as $\boldsymbol{0}$, and identity matrix as ${\boldsymbol{I}}$, both with appropriate dimensions. $\boldsymbol{A}^\prime$ denotes the transpose of matrix $\boldsymbol{A}$.

\section{Problem Statement}
Consider the following discrete-time dynamical system\footnote{In (\ref{dynamics}), at time $k$, we denote the control input as $\boldsymbol u_{k+1}$ instead of $ \boldsymbol u_{k}$ due to notation simplicity of following expositions, as adopted in \cite{levine2012continuous}.}:
\begin{equation} \label{dynamics}
\boldsymbol x_{t+1}=\boldsymbol f(\boldsymbol x_t, \boldsymbol u_{t+1}), \quad \boldsymbol{x}_0 \in \mathbb{R}^n,
\end{equation}
where  vector function $\boldsymbol f: \mathbb{R}^n \times\mathbb{R}^m \mapsto \mathbb{R}^n $ is  differentiable; $\boldsymbol x_t \in \mathbb{R}^n$ denotes the system state;  $\boldsymbol u_t \in \mathbb{R}^m$ is the control input; and $t=0,1,\dots$ is the time step. Let  
\begin{equation}\label{eq_data}
\boldsymbol{\xi}=\{\boldsymbol{\xi}_t:  t=1,2,...,T\}\,\, \text{ with } \,\,\boldsymbol{\xi}_t=\{\boldsymbol{x}^*_t, \boldsymbol{u}^*_t\}
\end{equation} 
 denote a trajectory of system states and inputs in a time horizon $T$. Note that here $T$ could be chosen arbitrarily large or even the infinity. Suppose the   system trajectory $\boldsymbol{\xi}$ is a result of optimizing the following objective function:
\begin{equation} \label{objFunc}
J(\boldsymbol{x}_{1:T},\boldsymbol{u}_{1:T})=\sum\nolimits_{t=1}^{T}{\boldsymbol{\omega}}^\prime \boldsymbol{\phi}(\boldsymbol{x}_t,\boldsymbol{u}_t).
\end{equation}
Here,  $\boldsymbol{\phi}:\mathbb{R}^n\times\mathbb{R}^m\mapsto\mathbb{R}^r$ is a vector of specified features  (or basis functions), with each feature  $\phi_i$ differentiable; and $\boldsymbol{\omega}\in  \mathbb{R}^r$ is the unknown weight vector with the $i$th element $\omega_i$ being the weight for feature $\phi_i$, $1\leq i\leq r$. 

Suppose that at each time step, one is accessible to an additional data segments (i.e. the collection of available data segments is enlarged as time evolves). 
A  \emph{data segment} is defined as  a  sequence of system states and inputs $\boldsymbol{\xi}_{\ubar{t}:\bar{t}}\subseteq \boldsymbol{\xi}$,   where $\ubar{t}$ and $\bar{t}$ denote the starting and end time of such  segment, respectively, and $1\leq \ubar{t}\leq \bar{t} \leq T$. The set of data segments at time step $t$, denoted by $\mathcal{S}_{t}$, is defined as:
\begin{align}
\mathcal{S}_{t}&\triangleq\{ \boldsymbol{\xi}_{\ubar{t}_{i}:\bar{t}_{i}}: i=1,2,\cdots,N\}. \label{segSetEachTime}
\end{align}
where $\ubar t_{i}$ and $\bar t_{i}$ are the starting and end time of the $i$th data segment. $N$ here is used to denote the total number of the available segments at time step $t$ and it becomes larger over time as more data segments are available. It is worth noting that we do not put any restrictions on  $\mathcal{S}_{t}$, which means that any segment in it can  be the full trajectory $\boldsymbol{\xi}$ or  even a single  input-state point at a time instance in terms of $\ubar{t}_{i}=\bar{t}_{i}$. Different segments are also allowed to have overlaps. 

Since the set of segments $\mathcal{S}_t$ at each time step may not be sufficient to determine $\boldsymbol{\omega}$ by itself, thus \textbf{the problem of interest} is to develop an algorithm to incrementally estimate $\boldsymbol{\omega}$  via IOC by incorporating data segments provided in set $\mathcal{S}_t$.  

\section{The Proposed Approach}
In this section, we first present the idea  of how to establish a constraint on the feature weights from any available segment data, then develop the incremental IOC approach.

\subsection{Key Idea to Utilize Any Trajectory Segment  in IOC}

Let $\boldsymbol{\xi}_{\ubar{t}:\bar{t}}$ be any  segment of the full trajectory $\boldsymbol{\xi}$  (\ref{eq_data})  with $1\leq \ubar{t}\leq \bar{t} \leq T$. Since the full trajectory $\boldsymbol{\xi}$ is generated by the system  (\ref{dynamics})  minimizing  (\ref{objFunc}),  when considering an infinite-horizon optimal control setting (i.e. $T$ is infinity), the trajectory is characterized by the Bellman optimality condition \cite{bertsekas2012dynamic}:
\begin{equation} \label{eq:Bellman}
    V_t(\boldsymbol{x}_t^*)=\boldsymbol{\omega}^\prime\boldsymbol{\phi(\boldsymbol{x}_t^*,\boldsymbol{u}_t^*)}+V_t(\boldsymbol{f}(\boldsymbol x_t^*, \boldsymbol u_{t}^*)),
\end{equation}

where $V_t(\boldsymbol{x}_t^*)$ is the unknown optimal cost-to-go function evaluated at state $x_t^*$. Then, if we take the derivatives of (\ref{eq:Bellman}) with respect to $\boldsymbol{x}_t^*$ and $\boldsymbol{u}_{t}^*$ while denoting $\boldsymbol{{\lambda}}^*_{t}=\frac{\partial V_t(\boldsymbol{x}_t)}{\partial \boldsymbol x^*_{t}}$, we will get
\begin{align} 
\boldsymbol{{\lambda}}^*_{t}-\frac{\partial \boldsymbol f^\prime}{\partial \boldsymbol x^*_{t}}\boldsymbol{{\lambda}}^*_{t+1}-\frac{\partial \boldsymbol \phi^\prime}{\partial \boldsymbol x^*_{t}}\boldsymbol{\omega}&=\boldsymbol{0}, \label{pontryagin1} \\
\frac{\partial \boldsymbol f^\prime}{\partial \boldsymbol u_{t}^*}\boldsymbol{{\lambda}}^*_{t}+ \frac{\partial \boldsymbol \phi^\prime}{\partial  \boldsymbol u^*_{t}}\boldsymbol{\omega}&=\boldsymbol{0},\label{pontryagin2}
\end{align} which can also be achieved based on Pontryagin's maximum principle \cite{pontryagin1962mathematical}. It follows   that for any trajectory segment $\boldsymbol{\xi}_{\ubar{t}:\bar{t}}$, by stacking (\ref{pontryagin1})-(\ref{pontryagin2}) for  $\ubar{t}\leq t\leq \bar{t}$ one has
\begin{align}
\boldsymbol{A}(\boldsymbol{\xi}_{\ubar{t}:\bar{t}}) \boldsymbol{{\lambda}}^*_{\ubar{t}:\bar{t}}-\boldsymbol{M}(\boldsymbol{\xi}_{\ubar{t}:\bar{t}}) \boldsymbol{\omega}&=\boldsymbol{V}(\boldsymbol{\xi}_{\ubar{t}:\bar{t}})\boldsymbol{\lambda}_{\bar{t}+1}^*, \label{pontryaginCompact1} \\
\boldsymbol{B}(\boldsymbol{\xi}_{\ubar{t}:\bar{t}}) \boldsymbol{\boldsymbol{{\lambda}}}^*_{\ubar{t}:\bar{t}}+\boldsymbol{N}(\boldsymbol{\xi}_{\ubar{t}:\bar{t}}) \boldsymbol{\omega}&=\boldsymbol{0}, \label{pontryaginCompact2}
\end{align}
with
\begin{align}
\boldsymbol{A}&\triangleq
\resizebox{0.45\columnwidth}{!}{$
	\begin{bmatrix}
	\boldsymbol I &\,\,\frac{-\partial \boldsymbol{f}^\prime}{\partial \boldsymbol{x}^*_{\ubar{t}}} &                      &              \\
	\boldsymbol 0& \boldsymbol I & \ddots &   \\
	&		 & \ddots& \phantom{-1}\frac{-\partial \boldsymbol{f}^\prime}{\partial \boldsymbol{x}^*_{\bar{t}-1}}   \\
	\boldsymbol 0& & \cdots& \boldsymbol I
	\end{bmatrix}
	$}, \,\,
\resizebox{0.23\columnwidth}{!}{$
	\boldsymbol{M}\triangleq\scriptscriptstyle
	\setlength\arraycolsep{1.0 pt}
	\begin{bmatrix}
	\frac{\partial \boldsymbol\phi^\prime}{\partial \boldsymbol x^*_{\ubar{t}}} \\
	\frac{\partial \boldsymbol\phi^\prime}{\partial \boldsymbol x^*_{\ubar{t}+1}} \\
	\vdots\\
	\frac{\partial \boldsymbol\phi^\prime}{\partial \boldsymbol x^*_{\bar{t}}}
	\end{bmatrix}
	$},
\label{recMatAM}
\\
\boldsymbol{B}&\triangleq
\resizebox{0.45\columnwidth}{!}{$
	\begin{bmatrix}
	\frac{\partial \boldsymbol{f}^\prime}{\partial \boldsymbol{u}^*_{\ubar{t}}} &   &     &                           \\
	&\frac{\partial \boldsymbol{f}^\prime}{\partial \boldsymbol{u}^*_{\ubar{t}+1}} & & \\
	&	 &\ddots	 &   \\
	& & & \frac{\partial \boldsymbol{f}^\prime}{\partial \boldsymbol{u}^*_{\bar{t}}}  \\
	\end{bmatrix}
	$},\,\,
\resizebox{0.25\columnwidth}{!}{$
	\boldsymbol{N} \triangleq
	\begin{bmatrix}
	\frac{\partial \boldsymbol\phi^\prime}{\partial \boldsymbol u^*_{\ubar{t}}} \\
	\frac{\partial \boldsymbol\phi^\prime}{\partial \boldsymbol u^*_{\ubar{t}+1}} \\
	\vdots \\
	\frac{\partial \boldsymbol\phi^\prime}{\partial \boldsymbol{u}^*_{\bar{t}}}
	\end{bmatrix}$}, \label{recMatBN}
\scriptstyle
\end{align}
and  
\begin{equation} \label{recMatV}
\boldsymbol{V}(\boldsymbol{\xi}_{\ubar{t}:\bar{t}})\triangleq\col\{\boldsymbol{0}, \frac{\partial\boldsymbol{f}^\prime }{\partial \boldsymbol{x}^*_{\bar{t}}}  \}.
\end{equation}
Dimensions of the above  matrices are $\boldsymbol{A}\in \mathbb{R}^{n(\bar{t}\text{-}\ubar{t}\text{+}1)\times n(\bar{t}\text{-}\ubar{t}\text{+}1)}$,  $ \boldsymbol{B}\in \mathbb{R}^{m(\bar{t}\text{-}\ubar{t}\text{+}1)\times n(\bar{t}\text{-}\ubar{t}\text{+}1)}$,  $ \boldsymbol{M}\in \mathbb{R}^{n(\bar{t}\text{-}\ubar{t}\text{+}1)\times r}$,   $ \boldsymbol{N}\in \mathbb{R}^{m(\bar{t}\text{-}\ubar{t}\text{+}1)\times r}$, and $\boldsymbol{V}\in \mathbb{R}^{n(\bar{t}\text{-}\ubar{t}\text{+}1)\times n}$,  respectively. In above (\ref{pontryaginCompact1}), since $\boldsymbol{ \lambda}_{T+1}^*$  is undefined when $\bar{t}=T$,   we  define $\boldsymbol{ \lambda}_{T+1}^*=\boldsymbol{0}$ . It is stated in \cite{jin2019inverse} that the finite-horizon optimal control setting which has cost function and dynamics constraint expressed in Lagrangian equation yields the same equations (\ref{pontryaginCompact1}) and (\ref{pontryaginCompact2}).

Since the matrix $\boldsymbol{A}(\boldsymbol{\xi}_{\ubar{t}:\bar{t}})$ is non-singular, one can eliminate $\boldsymbol{{\lambda}}^*_{\ubar{t}:\bar{t}}$ by combining (\ref{pontryaginCompact1}) and (\ref{pontryaginCompact2}) and obtain
\begin{equation}\label{recoveryEqu}
\boldsymbol{F}(\boldsymbol{\xi}_{\ubar{t}:\bar{t}})\boldsymbol{\omega}+\boldsymbol{E}(\boldsymbol{\xi}_{\ubar{t}:\bar{t}})\boldsymbol{\lambda}^*_{\bar{t}\text{+}1}=\boldsymbol{0}.
\end{equation}
Here
\begin{align}
\boldsymbol{F}(\boldsymbol{\xi}_{\ubar{t}:\bar{t}})&=
\boldsymbol{B}{\boldsymbol{A}}^{\text{-}1}\boldsymbol{M}\text{+}\boldsymbol{N}\in\mathbb{R}^{m(\bar{t}\text{-}\ubar{t}\text{+}1)\times r},\label{recoveryMat1} \\
\boldsymbol{E}(\boldsymbol{\xi}_{\ubar{t}:\bar{t}})&=
\boldsymbol{B}{\boldsymbol{A}}^{\text{-}1}\boldsymbol{V}\in \mathbb{R}^{m(\bar{t}\text{-}\ubar{t}\text{+}1)\times n}. \label{recoveryMat2}
\end{align}
Note that (\ref{recoveryEqu}) establishes a relation between any data segment $\boldsymbol{\xi}_{\ubar{t}:\bar{t}}$,  the unknown weights $\boldsymbol{\omega}$, and the costate $\boldsymbol{\lambda}^*_{\bar{t}+1}$. Note that $\boldsymbol{\lambda}_{\bar{t}+1}$ is unknown and actually related to the  value function of future information \cite{jin2018inverse}. 


\begin{definition}[Effective Data for IOC]\label{SDEDef}
	Given  system   (\ref{dynamics}) and an arbitrary  segment  $\boldsymbol{\xi}_{\ubar{t}:\bar{t}}=\{\boldsymbol{x}^*_{\ubar{t}:\bar{t}},\boldsymbol{u}^*_{\ubar{t}:\bar{t}}\}\subset\boldsymbol{\xi}$, $1\leq t\leq \bar{t}\leq T$, we say the  segment $\boldsymbol{\xi}_{\ubar{t}:\bar{t}}$ is  data effective if
	\begin{equation}\label{SDE}
	\rank\boldsymbol{E}(\boldsymbol{\xi}_{\ubar{t}:\bar{t}})=n,
	\end{equation}
	where $\boldsymbol{E}(\boldsymbol{\xi}_{\ubar{t}:\bar{t}})\in\mathbb{R}^{m(\bar{t}\text{-}\ubar{t}\text{+}1)\times n} $ is as defined in (\ref{recoveryMat2}).
\end{definition}

It follows from Definition \ref{SDEDef} that for any effective  segment $\boldsymbol{\xi}_{\ubar{t}:\bar{t}}$, the corresponding quantity ${\boldsymbol{E}}^\prime\boldsymbol{E}$ is non-singular. Thus by multiplying ${\boldsymbol{E}}^\prime$ to both sides of (\ref{recoveryEqu}), we will have
\begin{equation}\label{keyEqu}
\boldsymbol{R}(\boldsymbol{\xi}_{\ubar{t}:\bar{t}})\boldsymbol{\omega}=\boldsymbol{0}
\end{equation} with
\begin{equation}\label{keyMat}
{\boldsymbol{R}(\boldsymbol{\xi}_{\ubar{t}:\bar{t}})\triangleq\boldsymbol{F}-\boldsymbol{E}\big({\boldsymbol{E}}^\prime\boldsymbol{E}\big)^{\text{-}1}\boldsymbol{E}^\prime\boldsymbol{F}.
}
\end{equation}
Then, we have the following lemma.
\begin{lemma}\label{lemma1}
	\cite{WanxinAutomatica} For any  segment $\boldsymbol{\xi}_{\ubar{t}:\bar{t}}\subseteq\boldsymbol{\xi}$ that is data effective, $\boldsymbol{\omega}$ must satisfy (\ref{keyEqu}).
\end{lemma}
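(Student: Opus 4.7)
The plan is to treat equation (\ref{recoveryEqu}) as a linear system in the two unknowns $\boldsymbol{\omega}$ and $\boldsymbol{\lambda}^*_{\bar{t}+1}$, and to eliminate the latter using the data-effectiveness hypothesis. Concretely, I will start from the identity $\boldsymbol{F}(\boldsymbol{\xi}_{\ubar{t}:\bar{t}})\boldsymbol{\omega}+\boldsymbol{E}(\boldsymbol{\xi}_{\ubar{t}:\bar{t}})\boldsymbol{\lambda}^*_{\bar{t}+1}=\boldsymbol{0}$, which holds for the true weight vector because it was derived by algebraic manipulation of the necessary conditions (\ref{pontryagin1})--(\ref{pontryagin2}) satisfied by the optimal trajectory that generated the segment.

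Next I would invoke Definition \ref{SDEDef}: since $\rank\boldsymbol{E}=n$ and $\boldsymbol{E}$ has $n$ columns, the Gram matrix $\boldsymbol{E}^\prime\boldsymbol{E}\in\mathbb{R}^{n\times n}$ is positive definite and therefore invertible. Left-multiplying (\ref{recoveryEqu}) by $\boldsymbol{E}^\prime$ and solving yields
\begin{equation*}
\boldsymbol{\lambda}^*_{\bar{t}+1}=-\bigl(\boldsymbol{E}^\prime\boldsymbol{E}\bigr)^{-1}\boldsymbol{E}^\prime\boldsymbol{F}\,\boldsymbol{\omega},
\end{equation*}
which is well-defined precisely under data-effectiveness. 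Substituting this expression back into (\ref{recoveryEqu}) eliminates $\boldsymbol{\lambda}^*_{\bar{t}+1}$ and produces
\begin{equation*}
\Bigl[\boldsymbol{F}-\boldsymbol{E}\bigl(\boldsymbol{E}^\prime\boldsymbol{E}\bigr)^{-1}\boldsymbol{E}^\prime\boldsymbol{F}\Bigr]\boldsymbol{\omega}=\boldsymbol{0},
\end{equation*}
which is exactly $\boldsymbol{R}(\boldsymbol{\xi}_{\ubar{t}:\bar{t}})\boldsymbol{\omega}=\boldsymbol{0}$ by the definition (\ref{keyMat}). An equivalent and perhaps more conceptual way to present the same step is to observe that $\boldsymbol{P}\triangleq\boldsymbol{I}-\boldsymbol{E}(\boldsymbol{E}^\prime\boldsymbol{E})^{-1}\boldsymbol{E}^\prime$ is the orthogonal projector onto the column space of $\boldsymbol{E}$'s complement, so applying $\boldsymbol{P}$ to (\ref{recoveryEqu}) annihilates the $\boldsymbol{E}\boldsymbol{\lambda}^*_{\bar{t}+1}$ term and leaves $\boldsymbol{P}\boldsymbol{F}\boldsymbol{\omega}=\boldsymbol{R}\boldsymbol{\omega}=\boldsymbol{0}$.

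Since everything reduces to the invertibility of $\boldsymbol{E}^\prime\boldsymbol{E}$ and one linear substitution, there is no genuinely hard step; the only subtlety worth flagging explicitly is that (\ref{recoveryEqu}) itself is valid with the convention $\boldsymbol{\lambda}^*_{T+1}=\boldsymbol{0}$ when $\bar{t}=T$, in which case the conclusion is immediate because $\boldsymbol{F}\boldsymbol{\omega}=\boldsymbol{0}$ already implies $\boldsymbol{R}\boldsymbol{\omega}=\boldsymbol{0}$. Thus the proof is essentially a two-line computation once Definition \ref{SDEDef} is in place, and the main conceptual point to emphasize is that data-effectiveness is exactly the minimal rank condition needed to make the elimination of the unknown future costate $\boldsymbol{\lambda}^*_{\bar{t}+1}$ well-posed.
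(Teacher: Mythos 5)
Your proof is correct and follows essentially the same route as the paper: the paper likewise notes that data-effectiveness makes $\boldsymbol{E}^\prime\boldsymbol{E}$ nonsingular, left-multiplies (\ref{recoveryEqu}) by $\boldsymbol{E}^\prime$ to eliminate the unknown costate $\boldsymbol{\lambda}^*_{\bar{t}+1}$, and substitutes back to obtain (\ref{keyEqu}) with $\boldsymbol{R}$ as in (\ref{keyMat}), deferring the formal statement to the cited reference. The only nitpick is phrasing: $\boldsymbol{I}-\boldsymbol{E}(\boldsymbol{E}^\prime\boldsymbol{E})^{-1}\boldsymbol{E}^\prime$ is the orthogonal projector onto the orthogonal complement of the column space of $\boldsymbol{E}$, not onto ``the column space of $\boldsymbol{E}$'s complement.''
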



Lemma \ref{lemma1} bridges between any data-effective  segment and the unkonwn objective function weights; that is,  any effective segment enforces a set of linear constraints to weights $\boldsymbol{\omega}$. Thus, more data-effective segments result in more  constraints for recovering $\boldsymbol{\omega}$.

\subsection{Incremental IOC from Demonstration Segments}
Based on Lemma \ref{lemma1}, at each time step $t$, given a collection of $N$ data segments $\mathcal{S}_{t}=\{ \boldsymbol{\xi}_{\ubar{t}_{i}:\bar{t}_{i}}: i=1,2,...,N\}$ in (\ref{segSetEachTime}), one has
\begin{equation}
\boldsymbol{R}(\boldsymbol{\xi}_{\ubar{t}_{i}:\bar{t}_{i}})\boldsymbol{{\omega}}=\boldsymbol{0}
\end{equation}
for each segment $\boldsymbol{\xi}_{\ubar{t}_{i}:\bar{t}_{i}}$ if it is effective, where $\boldsymbol{R}(\boldsymbol{\xi}_{\ubar{t}_{i}:\bar{t}_{i}})$ is defined in (\ref{keyMat}). Then for all data-effective  segments in $\mathcal{S}_{k}$, one has the linear equation of the  weights:
\begin{subequations}
\begin{equation}
\boldsymbol{R}(\mathcal{S}_{t})\boldsymbol{\omega}=\boldsymbol{0},
\label{recEqu.1}
\end{equation}
\text{with}
\begin{equation} 
\begin{aligned}
     \boldsymbol{R}(\mathcal{S}_{t})&= \\
     \big\{\col&\{\boldsymbol{R}(\boldsymbol{\xi}_{\ubar{t}_{i}:\bar{t}_{i}})\}: \rank \boldsymbol{E}(\boldsymbol{\xi}_{\ubar{t}_{i}:\bar{t}_{i}})=n, 1\leq i \leq N \big\}
     \label{recEqu.2}
\end{aligned}
\end{equation}
\end{subequations}
Here $\boldsymbol{R}(\mathcal{S}_{t})$ is a stack of $\boldsymbol{R}(\boldsymbol{\xi}_{\ubar{t}_{i}:\bar{t}_{i}})$ for which the corresponding segment $\boldsymbol{\xi}_{\ubar{t}_{i}:\bar{t}_{i}}$ is effective.

In implementation, since the observation noise  and/or sub-optimality  exist,  directly computing  the  weights $\boldsymbol{\omega}$ from (\ref{recEqu.2})  thus may only lead to trivial solutions. Therefore, as adopted in previous IOC methods \cite{keshavarz2011imputing,puydupin2012convex,molloy2018finite,englert2017inverse}, one can choose to obtain a least square estimate for the weights by solving the following equivalent optimization,
\begin{subequations}\label{ioc_least}
\begin{equation}\label{eq_least}
\boldsymbol{\hat{\omega}}=\arg\min_{\boldsymbol{\omega}} \frac{1}{2}\norm{\boldsymbol{R}(\mathcal{S}_t)\boldsymbol{\omega}}^2,
\end{equation}
subject to 
\begin{equation}
\label{eq_cons}[ 1, 0, \cdots, 0] \, \boldsymbol{\omega}=1.
\end{equation}
\end{subequations}
Here, $\norm{\cdot}$ stands for the $l_2$ norm; and $\boldsymbol{\hat{\omega}}$ is called a least-square estimate to the unknown weights $\boldsymbol{\omega}$. Note that scaling $\boldsymbol\omega$ by an non-zero constant does not affect the IOC problem because a scaled $\boldsymbol\omega$ will result in the same trajectory $\boldsymbol{\xi}$. Without losing any generality, one can always scale $\boldsymbol\omega$ such that its first entry is equal to 1, as adopted in \cite{keshavarz2011imputing}, namely,
\begin{equation}\label{eq_omega}
\boldsymbol{e}_1^\prime\boldsymbol\omega=1\,\, \,\text{with}\,\,\,\boldsymbol{e}_1=[1, 0, \cdots, 0]^\prime\in\mathbb{R}^r.
\end{equation}

Based on the formulation in (\ref{eq_least}), if we consider at each time step $t$, an additional segment is given and added to the set $\mathcal{S}_t$. Then, as time evolves, the set of segments $\mathcal{S}_t$ is enlarged incrementally, the following lemma presents an incremental way to solve for the least square estimate $\boldsymbol{\hat{\omega}}$.

\begin{lemma}\label{lemma4}
	Given a set of trajectory segments $\mathcal{S}_t$ at time step $t$, for the $i$th segment $\boldsymbol{\zeta}_{\ubar{t}_i:\bar{t}_i}$, $1\leq i \leq N$ in the set, let
	\begin{equation} \label{eq:W_matrix}
	{W}_i^t=
	\begin{cases}
	{W}_{i\text{-}1}^t+\boldsymbol{R}(\boldsymbol{\xi}_{\ubar{t}_{i}:\bar{t}_{i}})^\prime\boldsymbol{R}(\boldsymbol{\xi}_{\ubar{t}_{i}:\bar{t}_{i}}) &\mbox{if    $\boldsymbol{\zeta}_{\ubar{t}_i:\bar{t}_i}$ effective},  \\
	{W}_{i\text{-}1}^t+\boldsymbol{0}  &\mbox{otherwise},
	\end{cases}
	\end{equation}
	with ${W}_0^t=\boldsymbol{0}$ and $\boldsymbol{R}(\boldsymbol{\xi}_{\ubar{t}_{i}:\bar{t}_{i}})$   defined in (\ref{keyMat}).   Then, we will have the matrix $W^t_N$ that is obtained with available effective data segments at current time step.
	
	As a result, the least-square estimate $\hat{\boldsymbol{\omega}}$  in (\ref{eq_least}) given previous $N$ segments is
	\begin{equation}\label{recoveredsolution}
	\boldsymbol{\hat{\omega}}=\frac{({W}^t_N)^{{-}1}\boldsymbol{e}_1}{\boldsymbol{e}_1^\prime ({W}^t_N)^{{-}1}\boldsymbol{e}_1}.
	\end{equation}
\end{lemma}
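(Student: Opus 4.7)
The plan is to recognize \eqref{ioc_least} as a convex equality-constrained quadratic program, rewrite the objective in terms of the Gram-type matrix $W_N^t$, and then derive the closed-form solution \eqref{recoveredsolution} from the first-order KKT conditions.

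First, I would connect the incremental update rule \eqref{eq:W_matrix} to the batch matrix in \eqref{eq_least}. By definition \eqref{recEqu.2}, $\boldsymbol{R}(\mathcal{S}_t)$ is the vertical stack of $\boldsymbol{R}(\boldsymbol{\xi}_{\ubar{t}_i:\bar{t}_i})$ over all effective segments. A direct block multiplication then yields
\begin{equation*}
\boldsymbol{R}(\mathcal{S}_t)^\prime \boldsymbol{R}(\mathcal{S}_t) = \sum_{i:\boldsymbol{\xi}_{\ubar{t}_i:\bar{t}_i}\text{ effective}} \boldsymbol{R}(\boldsymbol{\xi}_{\ubar{t}_i:\bar{t}_i})^\prime \boldsymbol{R}(\boldsymbol{\xi}_{\ubar{t}_i:\bar{t}_i}) = W_N^t,
\end{equation*}
where the last equality follows from unrolling the recursion \eqref{eq:W_matrix} from $W_0^t=\boldsymbol{0}$. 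Hence the objective in \eqref{eq_least} equals $\tfrac{1}{2}\boldsymbol{\omega}^\prime W_N^t \boldsymbol{\omega}$, and the problem reduces to
\begin{equation*}
\min_{\boldsymbol{\omega}} \tfrac{1}{2}\boldsymbol{\omega}^\prime W_N^t \boldsymbol{\omega} \quad \text{subject to} \quad \boldsymbol{e}_1^\prime \boldsymbol{\omega} = 1.
\end{equation*}

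Next I would apply Lagrange multipliers. With $L(\boldsymbol{\omega},\mu)=\tfrac{1}{2}\boldsymbol{\omega}^\prime W_N^t \boldsymbol{\omega}-\mu(\boldsymbol{e}_1^\prime \boldsymbol{\omega}-1)$, the stationarity condition gives $W_N^t \boldsymbol{\omega} = \mu \boldsymbol{e}_1$, so (assuming $W_N^t$ is invertible) $\boldsymbol{\omega}=\mu (W_N^t)^{-1}\boldsymbol{e}_1$. Enforcing the constraint $\boldsymbol{e}_1^\prime \boldsymbol{\omega}=1$ fixes $\mu=\bigl(\boldsymbol{e}_1^\prime (W_N^t)^{-1}\boldsymbol{e}_1\bigr)^{-1}$, yielding exactly \eqref{recoveredsolution}. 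Since $W_N^t$ is positive semidefinite and the problem is convex with a linear constraint, the KKT point is the unique global minimizer.

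The main obstacle is the invertibility of $W_N^t$ and the positivity of $\boldsymbol{e}_1^\prime (W_N^t)^{-1}\boldsymbol{e}_1$, since without these the closed form in \eqref{recoveredsolution} is undefined. I would address this by noting that $W_N^t$ is symmetric positive semidefinite by construction, and it is positive definite precisely when the stacked matrix $\boldsymbol{R}(\mathcal{S}_t)$ has full column rank $r$, i.e.\ when the accumulated effective segments collectively impose enough linear constraints on $\boldsymbol{\omega}$ to identify it up to scaling. This is the implicit identifiability assumption under which \eqref{eq_least}--\eqref{eq_cons} has a unique solution; under it, $(W_N^t)^{-1}$ exists and is positive definite, so $\boldsymbol{e}_1^\prime (W_N^t)^{-1}\boldsymbol{e}_1 > 0$ and the formula is well-posed. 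If identifiability fails for small $N$, one can replace $(W_N^t)^{-1}$ with a Moore--Penrose pseudoinverse until enough effective segments arrive; the incremental update \eqref{eq:W_matrix} remains valid throughout.
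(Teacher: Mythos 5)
Your proposal is correct and follows essentially the same route as the paper: both reduce \eqref{ioc_least} to the equality-constrained quadratic program $\min_{\boldsymbol{\omega}}\boldsymbol{\omega}^\prime W_N^t\boldsymbol{\omega}$ subject to $\boldsymbol{e}_1^\prime\boldsymbol{\omega}=1$ and solve the resulting KKT system, the only cosmetic difference being that the paper inverts the bordered KKT matrix while you solve the stationarity condition $W_N^t\boldsymbol{\omega}=\mu\boldsymbol{e}_1$ directly. Your explicit unrolling of the recursion to show $W_N^t=\boldsymbol{R}(\mathcal{S}_t)^\prime\boldsymbol{R}(\mathcal{S}_t)$ and your discussion of when $W_N^t$ is invertible make precise what the paper compresses into the assumption ``$W_N^t>0$.''
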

\begin{proof}
Consider  $\boldsymbol{R}(\mathcal{S}_t)$ in (\ref{recEqu.2}), with the matrix ${W}^t_N$ described in (\ref{eq:W_matrix}), the optimization problem in (\ref{ioc_least}) is equivalent to
	\begin{equation}\label{quadprog1}
	\min_{\boldsymbol{\omega}}
		\boldsymbol{\omega}^\prime
		W^t_N
		\boldsymbol{\omega}
 \quad\,\, \text{s.t.} \quad \boldsymbol{\omega}^\prime \boldsymbol{e}_1=1.
	\end{equation}
	If ${W}^t_N>0$, the solution $\hat{\boldsymbol{{\omega}}}$ to (\ref{quadprog1})  is
	\begin{align}
	{\hat{\boldsymbol{\omega}}}&=
	\begin{bmatrix}
	\boldsymbol{e}_1^\prime&
	0
	\end{bmatrix}
	{\begin{bmatrix}
		{W}^t_N & \boldsymbol{e}_1\\
		\boldsymbol{e}_1^\prime& 0
		\end{bmatrix}}^{\text{-}1}
	\begin{bmatrix}
	\mathbf{0}\\
	1
	\end{bmatrix}\nonumber\\
	&=\frac{({W}^t_N)^{\text{-}1}\boldsymbol{e}_1}{\boldsymbol{e}_1^\prime{({W}^t_N)}^{\text{-}1}\boldsymbol{e}_1}
	\end{align} 
	which completes the proof.
\end{proof} 

  Lemma \ref{lemma4} shows that the least square estimate of the weights in (\ref{ioc_least}) can be achieved  incrementally by adding the new segment information from a new time step to the matrix ${W}^t_N$. As ${W}^t_N\in \mathbb{R}^{r\times r}$ is of fixed dimension, there is not additional memory consumption as new available data is included. Given previous data segments at each of the time steps, the least square estimate of the unknown weights  are solved by (\ref{recoveredsolution}).
Based on Lemma \ref{lemma4}, we present the IOC algorithm using demonstration segments in Algorithm \ref{algorithm}. 

\begin{algorithm2e}[h]
	\small \caption{Incremental IOC to recover objective}
	\label{algorithm}
	\SetKwInput{initialize}{Initialize}
	\KwIn{
		a  feature vector $\boldsymbol\phi$.}                                            
	\initialize{$W_N^0=\boldsymbol{0}$, $\mathcal{S}_t=\emptyset$}
    \For{$t=1,2,\cdots$}{
	    Append a new data segment $ \boldsymbol{\xi}_{\ubar{t}:\bar{t}} $ to the set $\mathcal{S}_{t}$;
	    

    		\eIf{$\boldsymbol{\xi}_{\ubar{t}:\bar{t}}$ is effective in \eqref{SDE}}{
    			${W}_N^t\leftarrow{W}_{N}^{t\text{-}1}+\boldsymbol{R}(\boldsymbol{\xi}_{\ubar{t}:\bar{t}})^\prime\boldsymbol{R}(\boldsymbol{\xi}_{\ubar{t}:\bar{t}})$}
    		{
    			${W}_N^t\leftarrow{W}_{N}^{t\text{-}1}+\boldsymbol{0}$
    		}
	Compute the least-square estimate $\boldsymbol{\hat{\omega}}$ via (\ref{recoveredsolution}).
	}
	
\end{algorithm2e}

\section{Numerical Experiments}
In this section, we evaluate the proposed method on a simulated robot arm and a 6-DoF quadrotor UAV system.

\subsection{Two-link robot arm}

\begin{figure}[h]
	\center
	\includegraphics[width=0.9\columnwidth]{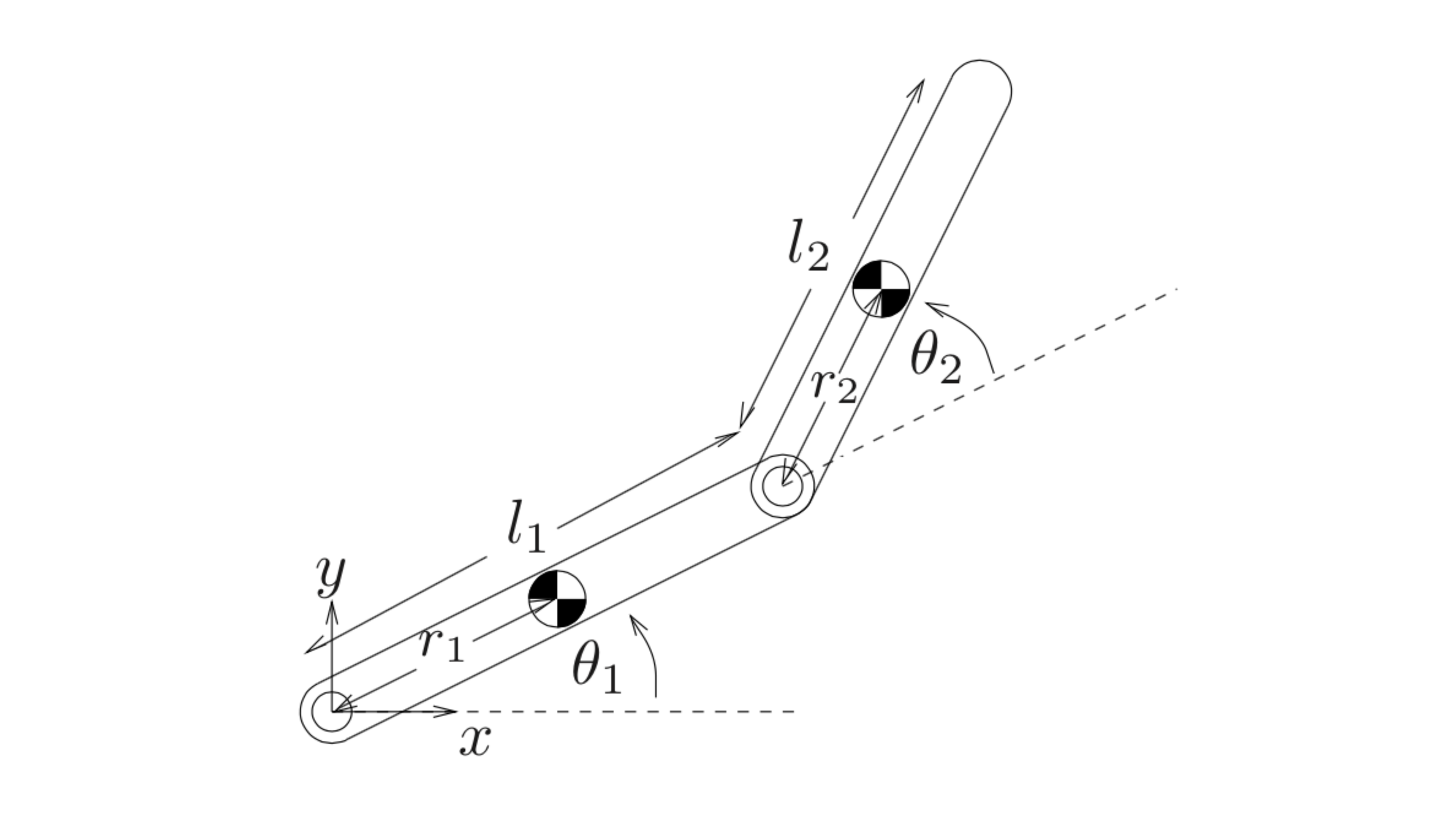}
	\caption{A simulated robot arm. }
	\label{graph}
\end{figure}

As shown in Fig. \ref{graph}, we consider that a two-link robot arm moves in vertical plane with  continuous dynamics given by \cite[p. 209]{spong2008robot}

\begin{equation} \label{armeDyn}
M(\boldsymbol\theta)\ddot{\boldsymbol\theta}+C(\boldsymbol\theta,\dot{\boldsymbol\theta})\dot{\boldsymbol\theta}+\boldsymbol g(\boldsymbol\theta)=\boldsymbol\tau,
\end{equation}
where $\boldsymbol\theta=[\theta_1, \theta_2]^{\prime} \in \mathbb{R}^2$ is the joint angle vector; $M(\boldsymbol \theta)\in \mathbb{R}^{2\times2}$ is the inertia matrix; $C(\boldsymbol\theta,\dot{\boldsymbol\theta}) \in \mathbb{R}^{2\times2}$ is the Coriolis matrix; $\boldsymbol g(\boldsymbol\theta) \in \mathbb{R}^2$ is the gravity vector; and $\boldsymbol\tau=[\tau_1, \tau_2]^{\prime}\in \mathbb{R}^2$ are the torques applied to each joint. The parameters used here follows \cite[p. 209]{spong2008robot}: the link mass $m_1=m_2=1 \mathrm{kg}$, the link length $l_1=l_2=1 \mathrm{m}$;  the distance from joint to center of mass (COM) $r_{1}=r_{2}=0.5 \mathrm{m}$, and the moment of inertia with respect to COM $I_1=I_2=1/12 \mathrm{kgm^2}$.	
By defining  the  states and control inputs of the robot arm system
\begin{equation}
\boldsymbol x\triangleq\begin{bmatrix}
\theta_1&\dot{\theta_1}&\theta_2&\dot{\theta_2}
\end{bmatrix}^{\prime}\quad \text{and} \quad \boldsymbol u\triangleq\boldsymbol\tau=
\begin{bmatrix}
\tau_1&\tau_2
\end{bmatrix}^{\prime},
\end{equation}
respectively, one could write (\ref{armeDyn}) in state-space representation  $\boldsymbol{\dot x}=\boldsymbol g(\boldsymbol x, \boldsymbol u)$  and further approximate it by the following discrete-time  form
\begin{equation}\label{equ_discretization}
\boldsymbol x_{t+1}\approx\boldsymbol x_t+\Delta\cdot\boldsymbol g(\boldsymbol x_t, \boldsymbol u_{t+1})\triangleq\boldsymbol{f}(\boldsymbol x_t, \boldsymbol u_{t+1}),
\end{equation}
where $\Delta =0.001 \mathrm{s}$ is the discretization interval. The motion of the robot arm  is controlled to  minimize the objective function (\ref{objFunc}), which here is set as a weighted distance to the goal state $\boldsymbol{x}^{\text{g}}=[{\theta}_1^\text{g}, {\dot{\theta}}_1^\text{g}, {\theta}_2^\text{g}, {\dot{\theta}}_2^\text{g}]^\prime=[0,0,0,0]^\prime$ plus the control effort $\norm{\boldsymbol{u}}^2$. Here, the corresponding  features and weights defined are as follows.
\begin{align}
\label{armOCObj}
    \boldsymbol{\phi}&= 
      \begin{bmatrix}
      ({\theta}_1-{\theta}_1^\text{g})^2\\
      ({{\dot\theta}}_1-{{\dot\theta}}_1^\text{g})^2\\
      ({\theta}_2-{\theta}_2^\text{g})^2\\
      ({{\dot\theta}}_2-{{\dot\theta}}_2^\text{g})^2\\
      \||\boldsymbol{u}||^2
    \end{bmatrix}, \qquad
    \boldsymbol{\omega}=\begin{bmatrix}
    1 \\ 2 \\ 1 \\ 1 \\1
    \end{bmatrix}, 
\end{align}
The initial condition of the robot arm is set as $ x_0=[\frac{2\pi}{3},0,\frac{-\pi}{2},0]^{\prime}$, and time horizon is set as $T=100$. We set the ground-truth weights as in (\ref{armOCObj}), and the resulting optimal trajectory of states and inputs is plotted in Fig.~\ref{trajectory}
\begin{figure}[h]
\hspace{-8pt}
	\includegraphics[width=0.9\columnwidth]{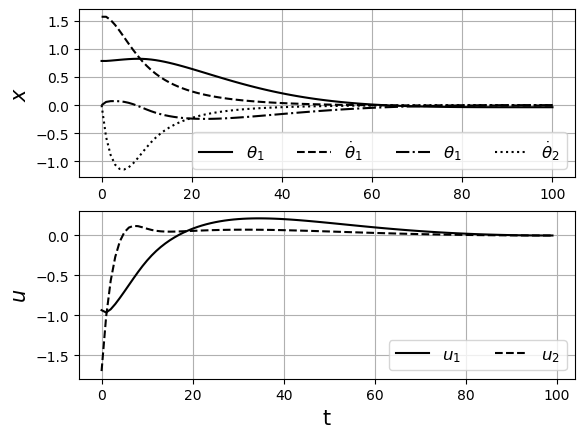}
	\caption{Optimal trajectory of the robot arm. }
	\label{trajectory}
\end{figure}

In the IOC task, we  learn the  weight vector $\boldsymbol{\omega}$ from  the segment data of the optimal trajectory in Fig. \ref{trajectory}. 
As shown in Table \ref{results_robotarm}, we perform five trials. For each trial, we will add a single data segment of the optimal trajectory to the set of data segments $\mathcal{S}_t$ at each time step, as indicated by the corresponding time intervals (second column). We apply Algorithm \ref{algorithm} to obtain the least-square estimate $\boldsymbol{\hat{\omega}}$ for each trial, and show the estimation results in the last column in Table~\ref{results_robotarm}.

\begin{table}[h]
\centering
	\caption	{IOC results from data segments}
	\begin{tabular}{lll}
		\toprule
		Trial No.  & Intervals of segments $[\ubar{t}_i, \bar t_i]$   & Estimate $\boldsymbol{\hat{\omega}}$     \\
		\midrule
		Trial 1 &    \begin{tabular}{@{}l@{}}$[1, 2]$,  $[10, 13]$,\\ $[70, 73]$, $[80, 83]$   \end{tabular}                   & $[1, 2, 1, 1, 1]$  \\[10pt]
		Trial 2 & $[10, 30],[50,51]$                          & $[1, 2, 1, 1, 1]$  \\[10pt]
		Trial 3 & $[50, 55],[90,92]$                          & $[1, 2, 1, 1, 1]$  \\[10pt]
		Trial 4 & \begin{tabular}{@{}l@{}} $[1, 4]$ \end{tabular}     &$[7.8, 1.5, 325.1, 1.4, 1]$\\[10pt]
		Trial 5 &\begin{tabular}{@{}l@{}}  $[1, 4]$, $[10, 13]$\end{tabular}      &$[1, 2, 1, 1, 1]$\\
		\bottomrule
	\end{tabular}
	\label{results_robotarm}
\end{table}

As shown in Table \ref{results_robotarm}, in all trials the algorithm successfully obtains the estimate $\boldsymbol{\hat{\omega}}$ to the feature weights in (\ref{armOCObj}) except in Trial 4. For all trials, we have randomly selected segments sparsely located in the time horizon. We have tested other data segments of the trajectory, and observed that most of trajectory segments are  effective except for those that are  very near the end of the trajectory, such as the segments within the time interval $[90, 100]$. This is because, as the system trajectory in Fig. \ref{trajectory} finally converges to zero,  the   states and inputs at the end of time horizon are very close to zeros (low-excitation) and thus likely become non-effective.

In Table \ref{results_robotarm}, it is also worth noting that  Trial 4 fails to recover the true weight vector. This is because   although the segment $\boldsymbol{\xi}_{1:4}$ in Trial 4 is data-effective (i.e., $\rank\boldsymbol{E}(\boldsymbol{\xi}_{1:4})=4 $), however, $\rank\boldsymbol{R}(\boldsymbol{\xi}_{1:4})=3<5$ and thus $\boldsymbol{R}(\mathcal{S}_t)$ has a kernel with dimension larger than one, which means a vector in the kernel is not guaranteed to be a scaled version of the true weight \cite{WanxinAutomatica}. To address this, we add another segment $\boldsymbol{\xi}_{10:13}$, as shown in Trial 5, in order to fulfill such rank requirement, and now $\rank\boldsymbol{R}(\mathcal{S}_t)=4$. Therefore, Trial 5 successfully estimates the true weight vector $\boldsymbol{\omega}$. 

Data-effectiveness is a precondition for a segment to be used for solving IOC problems. Although a single segment is data-effective, it may not necessarily suffice for recovering the weight. The aforementioned rank requirement  is stricter than the data-effectiveness condition  (\ref{SDE}), because $\boldsymbol{E}(\boldsymbol{\xi}_{\ubar{t}:\bar{t}})$ only relies on segment data and dynamics, while $\boldsymbol{R}(\boldsymbol{\xi}_{\ubar{t}:\bar{t}})$ additionally relies on features.

\subsection{Quadrotor UAV}
Next, we apply the proposed method to learn the objective function for a 6-DoF quadrotor UAV maneuvering system. Consider a quadrotor UAV with the following dynamics

\begin{equation}
    \begin{aligned} 
         \dot{\boldsymbol{p}}_I &= \boldsymbol{v}_I, \\
         m\dot{\boldsymbol{v}}_I &= m\mathbf{g}_I+\mathbf{F}_I, \\
         \dot{\boldsymbol{q}}_{B/I} &= \frac{1}{2}\Omega(\boldsymbol{\omega}_B)\boldsymbol{q}_{B/I}, \\
         J_B\dot{\boldsymbol{\omega}}_B &= \mathbf{M}_B - \boldsymbol{\omega} \times J_B\boldsymbol{\omega}_B.
    \end{aligned}
\end{equation}
Here, the subscription $_B$ and $_I$ denote a quantity is expressed in the body frame and inertial (world) frame, respectively; $m$ and $J_B\in\mathbb{R}^{3\times3}$ are the mass ($m=1$kg) and moment of inertia ($J_B=\diag[1,1,5]\text{kgm}^2$) with respect to body frame of the UAV, respectively. $g$ is the gravitational constant ($g=10\text{kg}/\text{m}^2$), $g_I=[0,0,g]^\prime$. $\boldsymbol{p}\in\mathbb{R}^3$  and $\boldsymbol{v}\in\mathbb{R}^3$ are the position and velocity vector of the UAV; $\boldsymbol\omega_B\in\mathbb{R}^3$ is the angular velocity vector of the UAV; $\boldsymbol{q}_{B/I}\in\mathbb{R}^4$ is the unit quaternion \cite{kuipers1999quaternions} that describes the attitude of UAV with respect to  the inertial frame; $\Omega(\boldsymbol\omega_B)$ is defined as:

\begin{equation}
    \Omega(\boldsymbol\omega_B) =
    \begin{bmatrix}
    0&-\omega_x&-\omega_y&-\omega_z \\
    \omega_x&0&\omega_z &-\omega_y \\
    \omega_y&-\omega_z &0&\omega_x \\
    \omega_z &\omega_y&-\omega_x&0
    \end{bmatrix},
\end{equation}
$\mathbf{M}_B\in\mathbb{R}^3$ is the torque applied to the UAV; $\mathbf{F}_I\in\mathbb{R}^3$ is the force vector applied to the UAV center of mass. The total force magnitude $f=\norm{\mathbf{F}_I}\in\mathbb{R}$ (along z-axis of the body frame) and torque $\mathbf{M}_B=[M_x,M_y,M_z]^\prime$ are generated by thrust from four rotating propellers $[T_1, T_2, T_3, T_4]^\prime$, their relationship can be expressed as:

\begin{equation}
    \begin{bmatrix}
    f\\
    M_x\\
    M_y\\
    M_z
    \end{bmatrix}
    =
    \begin{bmatrix}
    1&1&1&1 \\
    0&-l_w/2&0&l_w/2 \\
    -l_w/2&0&l_w/2&0 \\
    c&-c&c&-c
    \end{bmatrix}
    \begin{bmatrix}
    T_1\\
    T_2\\
    T_3\\
    T_4
    \end{bmatrix},
\end{equation}
where $l_w$ is the wing length of the UAV ($l_w = 0.4m$) and $c$ is a fixed constant ($c=0.01$). Similar to (\ref{equ_discretization}), we discretize the above dynamics with discretization interval of 0.1s. 

The state and input vectors of the UAV are defined as:
\begin{equation}
    \begin{aligned}
        \boldsymbol{x} &\triangleq
        \begin{bmatrix}
        \boldsymbol{p}'& \boldsymbol{v}'& \boldsymbol{q}' & \boldsymbol\omega'
        \end{bmatrix}
        ' \in \mathbb{R}^{13}, \\
        \boldsymbol{u} &\triangleq
        \begin{bmatrix}
        T_1&T_2&T_3&T_4
        \end{bmatrix}
        '\in\mathbb{R}^4.
    \end{aligned}
\end{equation}
The control objective function of the UAV includes a carefully selected attitude error term. As used in \cite{lee2010geometric}, we define the attitude error between UAV's current attitude $\boldsymbol{q}$ and the goal attitude $\boldsymbol{q}^{\text{g}}$ as:
\begin{equation}
    e(\boldsymbol{q},\boldsymbol{q}^\text{g}) = \frac{1}{2}\text{Tr}(I-R'(\boldsymbol{q}^\text{g})R(\boldsymbol{q})),
\end{equation}
where $R(\boldsymbol{q})\in\mathbb{R}^{3\times3}$ is the direction cosine matrix \cite{kuipers1999quaternions} directly corresponding to the quaternion $\boldsymbol{q}$. Other error terms that are included in the control objective function are simply the squared distances to their corresponding goals.

We generate the UAV optimal trajectory by minimizing a given control objective function. The initial state is set as $\boldsymbol{x}_0=[\boldsymbol{p}_\text{0},\boldsymbol{v}_\text{0},\boldsymbol{q}_\text{0},\boldsymbol{\omega}_\text{0}]^\prime=[-8,-6,9,0,0,0,1,0,0,0,1,1,1]'$, and the  goal state is set as $\boldsymbol{x}^\text{g}=[\boldsymbol{p}^\text{g},\boldsymbol{v}^\text{g},\boldsymbol{q}^\text{g},\boldsymbol{\omega}^\text{g}]^\prime=[0,0,0,0,0,0,1,0,0,0,0,0,0]'$. The control objective function is written as the weighted distance to the goal state plus the control effort $\norm{\boldsymbol{u}}^2$, where the features and weights are defined as follows:
\begin{equation}
    \boldsymbol{\phi} = 
    \begin{bmatrix}
    ||\boldsymbol{p}-\boldsymbol{p}^\text{g}||^2\\
    ||\boldsymbol{v}-\boldsymbol{v}^\text{g}||^2\\
    \frac{1}{2}\text{Tr}(I-R'(\boldsymbol{q}^\text{g})R(\boldsymbol{q}))\\
    ||\boldsymbol{u}||^2
    \end{bmatrix}, \qquad \boldsymbol{\omega} =
    \begin{bmatrix}
    2\\ 1\\ 1\\ 2
    \end{bmatrix}.
\end{equation}
The time horizon is set to $T=50$.

Similar to  the previous experiment, we set up four trials, and for each trial we observe different segments of the optimal state trajectories, as listed in the second column in Table \ref{results_UAV}. The result of feature weights estimation is shown the last column in  Table \ref{results_UAV}.

\begin{table}[h]
\centering
	\caption	{IOC results from data segments}
	\begin{tabular}{lll}
		\toprule
		Trial No.  & Intervals of segments $[\ubar{t}_i, \bar t_i]$    &  Estimate $\boldsymbol{\hat{\omega}}$     \\
		\midrule

		Trial 1 & $[5, 40]$                            &   $[2,1,1,2]$  \\[10pt]
		Trial 2 &    \begin{tabular}{@{}l@{}}$[5, 12]$,  $[10, 17]$,\\ $[25, 45]$ , $[20, 40]$   \end{tabular}     &  $[2,1,1.06,1.99]$  \\[10pt]
		Trial 3 & \begin{tabular}{@{}l@{}}$[9, 16]$,  $[26, 39]$ \end{tabular}   & $[2, 1,1.03,1.99]$\\[10pt]
		Trial 4 &\begin{tabular}{@{}l@{}}  $[1, 8]$,  $[12, 19]$ ,\\$[21, 41]$\end{tabular}   & $[2, 1,1,1.99]$\\
		\bottomrule
	\end{tabular}
	\label{results_UAV}
\end{table}

As shown in Table \ref{results_UAV}, in different trials, we use different segment of system trajectory to recover the true weight vector $\boldsymbol{\omega}$ incrementally. All used segments are data-effective. The proposed method successfully estimates  feature weights. The results demonstrate  effectiveness of the proposed method for incrementally learning an objective function.

\section{Conclusions and Future Directions}
In this paper, an incremental inverse optimal control method is proposed to learn the objective function. The available data is a collection of multiple segments of a system optimal trajectory from different time steps.  We first introduce the concept of data effectiveness to  evaluate the contribution of any segment to IOC, and then show that  each segment data can be utilized  to establish a linear constraint on the unknown objective weights.  Along this key idea,  the proposed IOC method incrementally incorporates each segment to obtain a least-square estimate of the  weights.

For  future research, we will extend the proposed method to a model-free IOC method. By say model free, it means that the dynamics model of the optimal control system is not known, and thus requires additional techniques for model approximation. The motivation here is that the assumption of a known dynamics model is sometimes challenging to fulfill since obtaining such dynamical model often requires expert knowledge. Data-driven methods would be considered as one of the possible options to recover the dynamics model from given data (e.g. states and input observations). Moreover, the estimation of weights vector with noisy data would also be one of the future research directions.

\bibliographystyle{IEEEtran}
\bibliography{references}

\addtolength{\textheight}{-12cm}   

\end{document}